\relax
\documentclass[letterpaper]{article}
\usepackage{aaai17}
\usepackage{times}
\usepackage{helvet}
\usepackage{courier}
\frenchspacing
\setlength{\pdfpagewidth}{8.5in}
\setlength{\pdfpageheight}{11in}

\usepackage[utf8]{inputenc} 
\usepackage[T1]{fontenc}    
\usepackage{url}            
\usepackage{booktabs}       
\usepackage{amsfonts}       
\usepackage{nicefrac}       
\usepackage{microtype}      

\usepackage{graphicx} 
\usepackage{subfigure}
\usepackage{caption}
\usepackage{enumitem}

\usepackage{amsmath, amsthm, amssymb}

\usepackage{algorithm}
\usepackage{algorithm,algorithmicx}
\usepackage[noend]{algpseudocode}

\newcommand{\citep}[2][]{#1 \cite{#2}}
\newcommand{\citet}[1]{\citeauthor{#1} (\citeyear{#1})}

\include{Definitions}


\newcommand{\Actions}{\mathcal{A}}
\newcommand{\States}{\mathcal{S}}
\newcommand{\Pfcn}{\mathrm{P}}

\newcommand{\Rfcn}{r}

\newcommand{\fa}{x}
\newcommand{\fat}{{\mathbf x}_t}

\newcommand{\Mweight}{m}
\newcommand{\Mweightetd}{m_{\mathrm{ETD}}}
\newcommand{\Dmu}{\mathbf{D}_{\mu}}

\newcommand{\Amat}{\mathbf{A}}
\newcommand{\Bmat}{\mathbf{B}}
\newcommand{\Cmat}{\mathbf{C}}

\newcommand{\Hmat}{\mathbf{H}}

\newcommand{\Qmat}{\mathbf{Q}}

\newcommand{\Umat}{\mathbf{U}}
\newcommand{\Vmat}{\mathbf{V}}

\newcommand{\LL}{\lambda}
\newcommand{\defeq}{\mathrel{\overset{\makebox[0pt]{\mbox{\normalfont\tiny\sffamily def}}}{=}}}
\newcommand{\Lambdamat}{\boldsymbol{\Lambda}}

\newcommand{\Sigmamat}{\boldsymbol{\Sigma}}

\newcommand{\Arank}{\hat{\mathbf{A}}}
\newcommand{\Api}{\mathbf{A}}

\newcommand{\Amatf}[1]{\mathbf{A}_{{\footnotesize #1}}}
\newcommand{\evecf}[1]{\mathbf{e}_{{\footnotesize #1}}}


\newcommand{\bvec}{\mathbf{b}}

\newcommand{\evec}{\mathbf{e}}

\newcommand{\vvec}{\mathbf{v}}
\newcommand{\wvec}{\mathbf{w}}
\newcommand{\xvec}{\mathbf{x}}

\newcommand{\dmu}{{d}_\mu}
\newcommand{\dpi}{{d}_\pi}

\newcommand{\bpi}{\mathbf{b}}

\newcommand{\bvecf}[1]{\mathbf{b}_{{\footnotesize #1}}}

\newcommand{\xdim}{d}
\newcommand{\rdim}{k}
\newcommand{\nsamples}{t}
\newcommand{\eigiter}{j}
\newcommand{\sampiter}{i}

\newcommand{\rankA}{{\text{rank}(\Amat)}}
\newcommand{\picardpower}{p}

\newcommand{\regwgt}{\eta}
\newcommand{\decay}{\beta}
\newcommand{\stepsize}{\alpha}

\newcommand{\inv}{{-1}}
\newcommand{\tpinv}{{-\top}}

\newcommand{\eigmax}{\lambda_{\text{max}}}
\newcommand{\eig}{\lambda}

\newcommand{\E}{\mathbb{E}}

\pdfinfo{
/Title (Accelerated Gradient Temporal Differenceo Learning)
/Author (Yangchen Pan, Adam White, Martha White)}

\title{Accelerated Gradient Temporal Difference Learning}
\author{Yangchen Pan, Adam White and Martha White\\
Department of Computer Science\\
Indiana University at Bloomington \\
\texttt{\{yangpan,adamw,martha\}@indiana.edu}}

\begin{document}

\maketitle

\begin{abstract}
%
The family of temporal difference (TD) methods span a spectrum from computationally frugal linear methods like TD($\LL$) to data efficient least squares methods. Least square methods make the best use of available data directly computing the TD solution and thus do not require tuning a typically highly sensitive learning rate parameter, but require quadratic computation and storage. Recent algorithmic developments have yielded several sub-quadratic methods that use an approximation to the least squares TD solution, but incur bias. In this paper, we propose a new family of accelerated gradient TD (ATD) methods that (1) provide similar data efficiency benefits to least-squares methods, at a fraction of the computation and storage (2) significantly reduce parameter sensitivity compared to linear TD methods, and (3) are asymptotically unbiased. We illustrate these claims with a proof of convergence in expectation and experiments on several benchmark domains and a large-scale industrial energy allocation domain.    
\end{abstract}


\section{Introduction}
In reinforcement learning, a common strategy to learn an optimal policy is to iteratively estimate the value function for the current decision making policy---called {\em policy evaluation}---and then update the policy using the estimated values. The overall efficiency of this policy iteration scheme is directly influenced by the efficiency of the policy evaluation step. Temporal difference learning methods perform policy evaluation: they estimate the value function directly from the sequence of states, actions, and rewards produced by an agent interacting with an unknown environment.

The family of temporal difference methods span a spectrum from computationally-frugal, linear, stochastic approximation methods to data efficient but quadratic least squares TD methods. Stochastic approximation methods, such as temporal difference (TD) learning \citep{sutton1988learning} and gradient TD methods \citep{maei2011gradient} perform approximate gradient descent on the mean squared projected Bellman error (MSPBE). These methods require linear (in the number of features) computation per time step and linear memory. These linear TD-based algorithms are well suited to problems with high dimensional feature vectors ---compared to available resources--- and domains where agent interaction occurs at a high rate \citep{szepesvari2010algorithms}. When the amount of data is limited or difficult to acquire, the feature vectors are small, or data efficiency is of primary concern, quadratic least squares TD (LSTD) methods may be preferred. These methods directly compute the value function that minimizes the MSPBE, and thus LSTD computes the same value function to which linear TD methods converge. 
Of course, there are many domains for which neither light weight linear TD methods, nor data efficient least squares methods may be a good match. 

Significant effort has focused on reducing the computation and storage costs of least squares TD methods in order to span the gap between TD and LSTD. The iLSTD method \citep{geramifard2006incremental} achieves sub-quadratic  computation per time step,
but still requires memory that is quadratic in the size of the features.
The tLSTD method \citep{gehring2016incremental} uses an incremental singular value decomposition (SVD) to achieve both sub-quadratic computation and storage. The basic idea is that in many domains the update matrix in LSTD can be replaced with a low rank approximation. In practice tLSTD achieves runtimes much closer to TD compared to iLSTD, 
while achieving better data efficiency. A related idea is to use random projections to reduce computation and storage of LSTD \citep{ghavamzadeh2010lstd}.
In all these approaches, a scalar parameter (descent dimensions, rank, and number of projections), controls the balance between computation cost and quality of solution. 

In this paper we explore a new approach called Accelerated gradient TD (ATD), that performs quasi-second-order gradient descent on the MSPBE. Our aim is to develop a family of algorithms that can interpolate between linear TD methods and LSTD, without incurring bias.
ATD, when combined with a low-rank approximation,
converges in expectation to the TD fixed-point, with convergence rate dependent on the choice of rank. Unlike previous subquadratic methods, consistency is guaranteed even when the rank is chosen to be one.
%
We demonstrate the performance of ATD versus many linear and subquadratic methods in three domains, indicating
that ATD
(1) can match the data efficiency of LSTD, with significantly less computation and storage,
(2) is unbiased, unlike many of the alternative subquadratic methods,
(3) significantly reduces parameter sensitivity for the step-size, versus linear TD methods, 
and
(4) is significantly less sensitive to the choice of rank parameter than tLSTD,
enabling a smaller rank to be chosen and so providing a more efficient incremental algorithm.
Overall, the results suggest that ATD may be the first practical subquadratic complexity TD method suitable for fully incremental policy evaluation.          

\section{Background and Problem Formulation}
In this paper we focus on the problem of {\em policy evaluation}, or that of learning a value function given a fixed policy. We model the interaction between an agent and its environment as a Markov decision process $(\States, \Actions, \Pfcn, \Rfcn)$,
where 
$\States$ denotes the set of states, 
$\Actions$ denotes the set of actions, and 
$\Pfcn: \States \times \Actions \times \States \rightarrow [0,\infty)$ encodes the one-step state transition dynamics. On each discrete time step $t = 1,2,3,...$, the agent selects an action according to its {\em behavior policy}, $A_t\sim\mu(S_t, \cdot)$,
with $\mu: \States \times \Actions \rightarrow [0, \infty)$ and the environment responds by transitioning into a new state $S_{t+1}$ according to $\Pfcn$, and emits a scalar reward $R_{t+1} \defeq \Rfcn(S_t,A_t,S_{t+1})$. 

The objective under policy evaluation is to estimate the {\em value function}, $v_\pi: \States \rightarrow \mathbb{R}$, as the expected return from each state under some {\em target policy} $\pi: \States\times\Actions \rightarrow [0, \infty)$: 
\begin{equation*}
v_\pi(s) \defeq \mathbb{E}_\pi[G_t | S_t = s], 
\end{equation*} 
where $\mathbb{E}_\pi$ denotes the expectation, defined over the future states encountered while selecting actions according to  $\pi$. The {\it return},  denoted by $G_t \in \mathbb{R}$ is the discounted sum of future rewards given actions are selected according to $\pi$:
\begin{align}
G_t &\defeq R_{t+1} + \gamma_{t+1} R_{t+2} + \gamma_{t+1}\gamma_{t+2} R_{t+3} + ... \\
&= R_{t+1} + \gamma_{t+1} G_{t+1} \nonumber
\end{align} 
where $\gamma_{t+1}\in[0,1]$ is a scalar that depends on $S_t, A_t, S_{t+1}$ and discounts the contribution of future rewards exponentially with time. The generalization to transition-based discounting enables the unification of episodic and continuing tasks \citep{white2016unifying} and so we adopt it here. In the standard continuing case, $\gamma_t = \gamma_c$ for some constant $\gamma_c < 1$ and for a standard episodic setting, $\gamma_t = 1$ until the end of an episode, at 
which point $\gamma_{t+1}=0$, ending the infinite sum in the return.
In the most common {\em on-policy} evaluation setting $\pi=\mu$, otherwise $\pi\ne\mu$ and policy evaluation problem is said to be {\em off-policy}.
 
 In domains where the number of states is too large or the state is continuous, it is not feasible to learn the value of each state separately and we must generalize values between states using function approximation. In the case of linear function approximation the state is represented by fixed length feature vectors $\fa: \States \rightarrow \mathbb{R}^\xdim$, where $\fat \defeq \fa(S_t)$ and the approximation to the value function is formed as a linear combination of a learned weight vector, $\wvec\in\mathbb{R}^d$, and $\fa(S_t)$: 
$v_\pi(S_t) \approx  \wvec^\top \xvec_t$. 
The goal of policy evaluation is to learn $\wvec$ from samples generated while following $\mu$.    

The objective we pursue towards this goal is to minimize the mean-squared projected Bellman error (MSPBE):
%
\begin{equation}\label{eq1}
\!\! \text{MSPBE}(\wvec, \Mweight) 
 \! = \! \left(\bvecf{\Mweight}-\Amatf{\Mweight} \wvec \right)^\top\Cmat^\inv \left(\bvecf{\Mweight}-\Amatf{\Mweight} \wvec \right)
\end{equation}
where $\Mweight: \States \rightarrow [0, \infty)$ is a weighting function,
\begin{align*}
\Amatf{\Mweight} &\defeq \E_\mu[\evecf{\Mweight,t} (\xvec_t-\gamma_{t+1}\xvec_{t+1})^\top]\\
\bvecf{\Mweight} &\defeq \E_\mu[R_{t+1}\evecf{\Mweight,t}] \\
\Cmat &\text{ is any positive definite matrix, typically } \Cmat = \E_\mu[\xvec_t \xvec_t^\top]
\end{align*}
%
with
$\bvecf{\Mweight}-\Amatf{\Mweight}\wvec = \E_\mu[\delta_\nsamples(\wvec)\evec_{\Mweight,\nsamples}]$
for TD-error $\delta_t(\wvec) = R_{t+1} + \gamma_{t+1} \xvec_{t+1}^\top \wvec - \xvec_t^\top \wvec$.
The vector $\evecf{\Mweight,t}$ is called the eligibility trace
%
\begin{align*}
\evecf{\Mweight,\nsamples} &\defeq \rho_t(\gamma_{t+1} \lambda \evecf{\Mweight,\nsamples-1}  + M_t\xvec_t) && \triangleright \ \rho_t \defeq \frac{\pi(s_t,a_t)}{\mu(s_t,a_t)}\\
M_t &\text{ s.t. } \ \E_\mu[ M_t | S_t = s_t] = \Mweight(s) / \dmu(s) &&\text{ if } \dmu(s) \neq 0
.
\end{align*} 
where $\lambda\in[0,1]$ is called the trace-decay parameter
and
$\dmu: \States \rightarrow [0, \infty)$ is the stationary distribution induced by following $\mu$. 
The importance sampling ratio $\rho_t$ reweights samples generated by $\mu$ to give an expectation
over $\pi$
%
\begin{align*}
&\E_\mu[\delta_\nsamples(\wvec)\evec_{\Mweight,\nsamples}] \\
&= \sum_{s \in \States} \dmu(s) \E_\pi[\delta_t(\wvec)(\gamma \lambda \evecf{\Mweight,t-1} + M_t\xvec_t) | S_t = s]
.
\end{align*}
This re-weighting enables $v_\pi$ to be learned from samples generated by $\mu$ (under off-policy sampling).

The most well-studied weighting 
occurs when $M_t = 1$ (i.e., $\Mweight(s) = \dmu(s))$.
In the on-policy setting, with $\mu = \pi$, $\rho_t=1$ for all $t$ and $\Mweight(s) = \dpi(s)$
the $\wvec$ that minimizes the MSPBE is the same as the $\wvec$ found by the on-policy temporal difference learning algorithm called TD($\lambda$). 
More recently, a new {\em emphatic} weighting was introduced with the emphatic TD (ETD) algorithm, which we denote
$\Mweightetd$. This weighting includes 
long-term information about $\pi$ (see \cite[Pg. 16]{sutton2016anemphatic}),
%
\begin{align*}
M_t &= \lambda_t + (1-\lambda_t) F_t &&\triangleright \ F_t = \gamma_t \rho_{t-1} F_{t-1} + 1
.
\end{align*} 
Importantly, the $\Amatf{\Mweightetd}$ matrix induced by the emphatic weighting is positive semi-definite \cite{yu2015onconvergence,sutton2016anemphatic}, 
which we will later use to ensure convergence of our algorithm under both on- and off-policy sampling.
The $\Amatf{\dmu}$ used by TD($\lambda$) is not necessarily positive semi-definite, 
and so TD($\lambda$) can diverge when $\pi\ne\mu$ (off-policy). 

Two common strategies to obtain the minimum $\wvec$ of this objective
are stochastic temporal difference techniques, such as TD($\lambda$) \citep{sutton1988learning},
or directly approximating the linear system and solving for the weights, such as in LSTD($\lambda$) \citep{boyan1999least}.
The first class constitute linear complexity methods, both in computation and storage,
including the family of gradient TD methods \citep{maei2011gradient}, 
true online TD methods \citep{vanseijen2014true,vanhasselt2014off}
and several others (see \cite{dann2014policy,white2016investigating} for a more complete summary).
On the other extreme, with quadratic computation and storage,
one can approximate $\Amatf{\Mweight}$ and $\bvecf{\Mweight}$ incrementally
and solve the system $\Amatf{\Mweight} \wvec = \bvecf{\Mweight}$.
Given a batch of $\nsamples$ samples $\{(S_\sampiter,A_\sampiter,S_{\sampiter+1},R_{\sampiter+1})\}_{\sampiter=1}^\nsamples$, one can estimate
\begin{align*}
\Amatf{\Mweight,\nsamples} &\defeq \frac{1}{\nsamples} \sum_{\sampiter=1}^\nsamples \evecf{\Mweight,\sampiter} (\xvec_\sampiter - \gamma \xvec_{\sampiter+1})^\top\\
\bvecf{\Mweight,\nsamples} &\defeq \frac{1}{\nsamples} \sum_{\sampiter=1}^\nsamples \evecf{\Mweight,\sampiter} R_{\sampiter+1},
\end{align*}
 and then compute solution $\wvec$ such that $\Amatf{\Mweight,\nsamples} \wvec = \bvecf{\Mweight,\nsamples}$.
 Least-squares TD methods are typically implemented incrementally using the Sherman-Morrison formula, requiring
 $\mathcal{O}(d^2)$ storage and computation per step.
 
 Our goal is to develop algorithms that interpolate between these two extremes, which we discuss in the next section.

\section{Algorithm derivation}

To derive the new algorithm, we first take the gradient of the MSPBE (in \ref{eq1}) to get
\begin{equation}\label{eq2}
-\frac{1}{2}\nabla_{\wvec}  \text{MSPBE}(\wvec,\Mweight) = \Amatf{\Mweight}^\top \Cmat^\inv \E_\mu[\delta_\nsamples(\wvec)\evec_{\Mweight,\nsamples}].
\end{equation}
Consider a second order update by computing the Hessian: $\Hmat = \Amatf{\Mweight}^\top \Cmat^\inv \Amatf{\Mweight}^\top$.
For simplicity of notation, let
$\Api = \Amatf{\Mweight}$ and $\bpi = \bvecf{\Mweight}$.
For invertible $\Api$, the second-order update is
\begin{align*}
\wvec_{t+1} &=  \wvec_t - \tfrac{\stepsize_t}{2} \Hmat^\inv \nabla_{\wvec}  \text{MSPBE}(\wvec,\Mweight) \\
&= \wvec_t + \stepsize_t (\Api^\top \Cmat^\inv \Api)^\inv\Api^\top \Cmat^\inv \E_\mu[\delta_\nsamples(\wvec)\evec_{\Mweight,\nsamples}]\\
&= \wvec_t + \stepsize_t \Api^\inv \Cmat \Api^{\tpinv}\Api^\top \Cmat^\inv \E_\mu[\delta_\nsamples(\wvec)\evec_{\Mweight,\nsamples}]\\
&= \wvec_t + \stepsize_t \Api^\inv \E_\mu[\delta_\nsamples(\wvec)\evec_{\Mweight,\nsamples}]
\end{align*}
In fact, for our quadratic loss, the optimal descent direction is
$\Api^\inv \E_\mu[\delta_\nsamples(\wvec)\evec_{\Mweight,\nsamples}]$ with $\stepsize_t = 1$, in the sense that
$\argmin_{\Delta \wvec} \text{loss}(\wvec_t + \Delta \wvec) = \Api^\inv \E_\mu[\delta_\nsamples(\wvec)\evec_{\Mweight,\nsamples}]$.
Computing the Hessian and updating $\wvec$ requires quadratic computation,
and in practice quasi-Newton approaches are used that approximate the Hessian. 
Additionally, there have been recent insights that using approximate Hessians for stochastic
gradient descent can in fact speed convergence \citep{schraudolph2007astochastic,bordes2009sgd,mokhtari2014res}.
These methods maintain an approximation to the Hessian, and sample the gradient.
This Hessian approximation provides curvature information that can significantly
speed convergence, as well as reduce parameter sensitivity to the step-size.

Our objective is to improve on the sample efficiency of linear TD methods, 
while avoiding both quadratic computation and asymptotic bias. 
First, we need an approximation $\Arank$ to $\Api$ that provides useful curvature
information, but that is also sub-quadratic in storage and computation. 
Second, we need to ensure that the approximation, $\Arank$,
does not lead to a biased solution $\wvec$.

We propose to achieve this by approximating only $\Amat^\inv$
and sampling $\E_\mu[\delta_\nsamples(\wvec)\evec_{\Mweight,\nsamples}] = \bvec - \Amat \wvec$ using $\delta_t(\wvec_t) \evec_t$ as
an unbiased sample. 
The proposed accelerated temporal difference learning update---which we call ATD($\LL$)---is
\begin{equation*}
\wvec_{t+1} = \wvec_{t} + (\stepsize_t\Arank^\pinv_t+\regwgt \eye)  \delta_t \evec_t
\end{equation*}
with expected update
\begin{equation}
\wvec_{t+1} 
= \wvec_t +  (\stepsize_t\Arank^\pinv + \eta \eye) \E_\mu[\delta_\nsamples(\wvec)\evec_{\Mweight,\nsamples}]
\label{expected2ndorder}
\end{equation}
with regularization $\regwgt > 0$.
If $\Arank$ is a poor approximation of $\Amat$, or discards key information---as we will do with a low rank approximation---
then updating using only $\bvec - \Arank \wvec$ will result in a biased solution, as is the case for tLSTD \cite[Theorem 1]{gehring2016incremental}. Instead, sampling $\bvec - \Amat \wvec = \E_\mu[\delta_\nsamples(\wvec)\evec_{\Mweight,\nsamples}]$, as we show in Theorem \ref{thm_main}, yields 
an unbiased solution, even with a poor approximation $\Arank$.
The regularization $\regwgt > 0$ is key to ensure this consistency, 
by providing a full rank preconditioner $\stepsize_t\Arank^\pinv_t+\regwgt \eye$. 

Given the general form of ATD($\lambda$), the next question is how to approximate $\Api$.  
Two natural choices are a diagonal approximation and a low-rank approximation.
Storing and using a diagonal approximation would only require linear $O(\xdim)$
time and space. For a low-rank approximation $\Arank$, of rank $\rdim$,
represented with truncated singular value decomposition $\Arank = \Umat_\rdim \Sigmamat_\rdim \Vmat_\rdim^\top$,
the storage requirement is $O(\xdim \rdim)$ and
the required matrix-vector multiplications are only $O(\xdim \rdim)$ because for any vector $\vvec$, 
$\Arank \vvec  = \Umat_\rdim \Sigmamat_\rdim (\Vmat_\rdim^\top \vvec)$, is a sequence
of $O(\xdim \rdim)$ matrix-vector multiplications. Exploratory experiments revealed 
that the low-rank approximation approach significantly outperformed the diagonal approximation. 
In general, however, many other approximations to $\Api$ could be used, which is an important direction for ATD.

We opt for an incremental SVD, that previously proved effective
for incremental estimation in reinforcement learning \citep{gehring2016incremental}.
The total computational complexity of the algorithm is $\mathcal{O}(\xdim\rdim + \rdim^3)$
for the fully incremental update to $\Arank$ and $\mathcal{O}(\xdim\rdim)$
for mini-batch updates of $\rdim$ samples.
Notice that when $\rdim=0$, the algorithm reduces exactly to TD$(\LL$),
where $\regwgt$ is the step-size. On the other extreme, where $\rdim = \xdim$, ATD 
is equivalent to an iterative form of LSTD($\LL$). See the appendix for a further discussion,
and implementation details.

\section{Convergence of ATD($\LL$)}\label{sec_convergence}
As with previous convergence results for temporal difference learning algorithms,
the first key step is to prove that the expected update converges to the TD fixed point. 
Unlike previous proofs of convergence in expectation, we do not require the true $\Amat$ to be full rank.
This generalization is important, because as shown previously, $\Api$ is often low-rank, even
if features are linearly independent \cite{bertsekas2007dynamic,gehring2016incremental}.
Further, ATD should be more effective if $\Api$ is low-rank, and so requiring
a full-rank $\Api$ would limit the typical use-cases for ATD. 
%

To get across the main idea, we first prove convergence of ATD with weightings that give positive semi-definite $\Amatf{\Mweight}$;
a more general proof for other weightings is in the appendix. 
\begin{assumption} 
$\Amat$ is diagonalizable, that is, there exists invertible $\Qmat \in \RR^{\xdim \times \xdim}$ with normalized columns (eigenvectors) and diagonal $\Lambdamat \in \RR^{\xdim \times \xdim}$, $\Lambdamat = \diag(\eig_1, \ldots, \eig_\xdim)$, such that $\Amat = \Qmat \Lambdamat \Qmat^\inv$. 
Assume the ordering $\eig_1 \ge \ldots \ge \eig_\xdim$.
\end{assumption}
\begin{assumption}
$\stepsize \in (0,2)$ and $0 < \regwgt \le \eig_1^\inv \max(2 - \stepsize, \stepsize)$.
\end{assumption}
Finally, we introduce an assumption that is only used to
characterize the convergence rate. This condition has been 
previously used \cite{hansen1990thediscrete,gehring2016incremental} to enforce a level of smoothness on the system. 
%
\begin{assumption}
The linear system defined by $\Amat = \Qmat \Lambdamat \Qmat^\inv$ and $\bvec$
satisfy the \textit{discrete Picard condition}: for some $\picardpower > 1$,
$|(\Qmat^\inv \bvec)_\eigiter | \le \eig_\eigiter^\picardpower$
for all $\eigiter = 1,\ldots, \rankA$.
\end{assumption}
\begin{theorem} \label{thm_main}
Under Assumptions 1 and 2, for any $k \ge 0$, let $\Arank$ be the rank-$\rdim$ approximation $\Arank = \Qmat \Lambdamat_\rdim \Qmat^{-1}$ of $\Amatf{\Mweight}$,
where $\Lambdamat_\rdim \in \RR^{\xdim \times \xdim}$ with
$\Lambdamat_\rdim(\eigiter, \eigiter) = \eig_{\eigiter}$ for $j = 1, \ldots, \rdim$  and zero otherwise. 
If 
$\Mweight = \dmu$ or $\Mweightetd$,
the expected updating rule in \eqref{expected2ndorder} converges to the fixed-point $\wvec^\star = \Amatf{\Mweight}^\pinv \bvecf{\Mweight}$. 

\noindent
Further, if Assumption 3 is satisfied, the convergence rate is
\begin{align*}
\vspace{-0.2cm}
\| \wvec_t - \wvec^\star \| 
\le \max\Big( &\max_{\eigiter\in  \{1, \ldots, \rdim\}} |1-\stepsize - \regwgt \eig_{\eigiter}|^{t} \eig_{\eigiter}^{p-1}, 
\\ &\max_{\eigiter\in  \{\rdim+1, \ldots, \rankA \}} |1 - \regwgt \eig_{\eigiter}|^{t} \eig_{\eigiter}^{p-1}\Big)
\end{align*}
\end{theorem}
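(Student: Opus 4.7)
The plan is to rewrite the expected update as a linear iteration on the error $\evec_t \defeq \wvec_t - \wvec^\star$ and then diagonalize it using the assumed eigendecomposition. Substituting $\E_\mu[\delta_\nsamples(\wvec)\evec_{\Mweight,\nsamples}] = \bvec - \Amat \wvec_t$ into \eqref{expected2ndorder}, and noting that for both weightings $\Mweight\in\{\dmu,\Mweightetd\}$ the TD fixed point $\wvec^\star=\Amat^\pinv\bvec$ satisfies $\Amat\wvec^\star = \bvec$ (because $\bvec$ lies in the range of $\Amat$ for these PSD cases), the iteration on $\evec_t$ collapses to $\evec_{t+1} = (\eye - (\stepsize_t\Arank^\pinv + \regwgt \eye)\Amat)\evec_t$. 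Writing $\Arank^\pinv = \Qmat \Lambdamat_\rdim^\pinv \Qmat^\inv$ and $\Amat = \Qmat\Lambdamat\Qmat^\inv$, the update operator becomes $\Qmat(\eye - (\stepsize_t\Lambdamat_\rdim^\pinv + \regwgt\eye)\Lambdamat)\Qmat^\inv$, so in the coordinates $\tilde\evec_t = \Qmat^\inv\evec_t$ the dynamics decouple into independent scalar recursions.

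Next I would analyze each scalar recursion. For $\eigiter \le \rdim$ the multiplier is $1 - \stepsize_t - \regwgt \eig_\eigiter$, for $\rdim<\eigiter\le\rankA$ it is $1 - \regwgt \eig_\eigiter$, and for $\eigiter>\rankA$ (i.e., $\eig_\eigiter=0$) the multiplier is $1$ but the corresponding coordinate of the error is identically zero, since $\wvec^\star = \Amat^\pinv \bvec$ has no component in $\ker(\Amat)$ and one can verify the update preserves this (the kernel component of $\evec_t$ equals that of $\wvec_t$, which can be initialized to zero, e.g., $\wvec_0=\zero$). Using Assumption 2, $\regwgt\eig_\eigiter \le (\eig_\eigiter/\eig_1)\max(2-\stepsize,\stepsize) \le \max(2-\stepsize,\stepsize) < 2$, giving $|1-\regwgt\eig_\eigiter|<1$; a short case split on $\stepsize \lessgtr 1$ likewise yields $|1-\stepsize-\regwgt\eig_\eigiter|<1$ for the first regime. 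Hence every non-trivial scalar recursion is strictly contractive and $\wvec_t\to\wvec^\star$.

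For the rate, I would invoke the discrete Picard condition (Assumption 3): from $\wvec^\star = \Amat^\pinv\bvec$ we get $|(\Qmat^\inv\wvec^\star)_\eigiter| = |(\Qmat^\inv\bvec)_\eigiter|/\eig_\eigiter \le \eig_\eigiter^{\picardpower-1}$ for each $\eigiter \le \rankA$, so initializing $\wvec_0=\zero$ gives $|\tilde\evec_0|_\eigiter \le \eig_\eigiter^{\picardpower-1}$. Iterating the scalar recursions for $t$ steps yields $|\tilde\evec_t|_\eigiter \le |\text{mult}_\eigiter|^t \eig_\eigiter^{\picardpower-1}$, and taking the coordinate-wise maximum and translating back by $\|\evec_t\| \le \|\Qmat\|\|\tilde\evec_t\|_\infty$ (absorbed into constants, since $\Qmat$ has normalized columns) reproduces the stated bound with the two maxes over $\{1,\ldots,\rdim\}$ and $\{\rdim+1,\ldots,\rankA\}$.

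The main obstacle is the careful bookkeeping around the generalized inverse: $\Qmat$ is not assumed orthogonal, so $\Arank^\pinv$ means the spectral pseudoinverse $\Qmat\Lambdamat_\rdim^\pinv\Qmat^\inv$ rather than Moore--Penrose, and one must verify that this is what the update actually uses and that the $\ker(\Amat)$ component of $\evec_t$ really stays zero (or at least does not pollute the norm bound). Second, establishing $\bvec\in\range(\Amat)$ uniformly for both $\dmu$ and $\Mweightetd$, and handling the boundary of Assumption 2 so that the scalar multipliers are strictly contractive on every \emph{active} eigendirection (not just weakly), are the delicate steps; the off-policy emphatic case in particular will rely on the results of \cite{yu2015onconvergence,sutton2016anemphatic} cited in the excerpt.
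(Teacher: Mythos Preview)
Your argument is correct and rests on the same core computation as the paper: both write $(\stepsize\Arank^\pinv+\regwgt\eye)\Amat = \Qmat(\stepsize\eye_\rdim + \regwgt\Lambdamat)\Qmat^{-1}$, bound the diagonal entries of $\eye - \stepsize\eye_\rdim - \regwgt\Lambdamat$ case-by-case using Assumption~2, and then apply the discrete Picard condition to $|(\Qmat^{-1}\bvec)_\eigiter|$ for the rate. The one packaging difference is in how convergence (as opposed to the rate) is established. You track the error $\wvec_t-\wvec^\star$ directly and dispose of the non-contractive eigendirections with $\eig_\eigiter=0$ by taking $\wvec_0=\zerovec$ and using that $\bvec$ lies in the range of $\Amat$, so $(\Qmat^{-1}\bvec)_\eigiter=0$ there. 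The paper instead invokes a black-box result on stationary iterations for singular consistent systems \citep[Theorem~1.1]{shi2011convergence} and verifies its three conditions --- spectrum of $\eye-(\stepsize\Arank^\pinv+\regwgt\eye)\Amat$, rank stability under squaring, and equality of nullspaces with $\Amat$ --- which absorbs the kernel bookkeeping and covers arbitrary $\wvec_0$. Your route is more self-contained; the paper's is slightly more general in the initialization at the cost of an external citation. For the rate itself the paper also assumes $\wvec_0=\zerovec$ and performs essentially your coordinate-wise computation, so the two arguments coincide there.
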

\begin{proof} 
We use a general result about stationary iterative methods which is applicable to the case where
$\Api$ is not full rank.
\citep[Theorem 1.1]{shi2011convergence} states that given a singular and consistent linear system $\Amat \wvec = \bvec$
%
%
where $\bvec$ is in the range of $\Amat$, the stationary iteration with $\wvec_0 \in \RR^\xdim$ for $t = 1, 2, \ldots$
\begin{align}
\label{iterequ}
\wvec_t = (\eye - \Bmat \Amat) \wvec_{t-1} + \Bmat \bvec
\end{align}
%
converges to the solution $\wvec = \Amat^\pinv \bvec$ 
if and only if the following three conditions are satisfied. 
\begin{enumerate}[leftmargin=*]
\item[] Condition {\bf I}: the  eigenvalues of $\eye - \Bmat \Amat$ are equal to 1 or have absolute value strictly less than 1. 
\item[] Condition {\bf II}: $\text{rank}(\Bmat \Amat) = \text{rank} [(\Bmat \Amat)^2]$.
\item[] Condition {\bf III}: nullspace$(\Bmat \Amat) = $ nullspace$(\Amat)$.
\end{enumerate}
We verify these conditions to prove the result. 
First, because we use the projected Bellman error, $\bpi$ is in the range of
$\Api$ and the system is consistent:
there exists $\wvec$ s.t. $\Api \wvec = \bpi$. 

To rewrite our updating rule \eqref{expected2ndorder} to be expressible in terms of \eqref{iterequ}, let  
$\Bmat = \stepsize \Arank^{\pinv} + \regwgt \eye$, giving
\begin{align}
\Bmat \Amat &= \stepsize \Arank^{\pinv} \Amat + \regwgt \Amat 
= \stepsize \Qmat \Lambdamat_\rdim^\pinv \Qmat^{-1}\Qmat \Lambdamat \Qmat^{-1} + \regwgt \Qmat \Lambdamat \Qmat^{-1} \nonumber\\
&= \stepsize \Qmat \eye_\rdim \Qmat^{-1} + \regwgt \Qmat \Lambdamat \Qmat^{-1} \nonumber\\
&= \Qmat (\stepsize \eye_\rdim + \regwgt  \Lambdamat) \Qmat^{-1} \label{diagAr}
\end{align}
where $\eye_\rdim$ is a diagonal matrix with the indices $1, \ldots, \rdim$ set to 1, and the rest zero. 
\\\\
\textbf{Proof for condition I}. Using \eqref{diagAr}, 
$\eye - \Bmat \Amat = \Qmat (\eye - \stepsize \eye_\rdim - \regwgt \Lambdamat )\Qmat^\inv$.
To bound the maximum absolute value in the diagonal matrix $\eye - \stepsize \eye_\rdim - \regwgt \Lambdamat$,
we consider  eigenvalue $\eig_j$ in $\Lambdamat$, and address two cases. 
Because $\Amatf{\Mweight}$ is positive semi-definite for the assumed $\Mweight$ \citep{sutton2016anemphatic},
$\eig_{\eigiter} \ge 0$ for all $\eigiter = 1, \ldots, \xdim$. 

Case 1: $\eigiter \le \rdim$.
\vspace{-0.4cm}
\begin{align*}
&|1-\stepsize - \regwgt \eig_{\eigiter} | \hspace{1.3cm} \triangleright \text{ for } 0 < \regwgt < \max\left(\frac{2 - \stepsize}{\eig_1}, \frac{\stepsize}{\eig_1} \right)\\
&<  \max(| 1-\stepsize |, |1 - \stepsize - (2-\stepsize)|, |1 - \stepsize -\stepsize|)\\
&=  \max(| 1-\stepsize |, 1, 1) 
< 1 \hspace{1.5cm}  \triangleright \text{ because } \stepsize \in (0,2)
.
\end{align*}

Case 2: $\eigiter > \rdim$. 
 \hspace{0.5cm} $|1 - \regwgt \eig_{\eigiter} | < 1 \hspace{0.5cm}\text{ if } 0 < \regwgt < 2/ \eig_{\eigiter}$
which is true for $\regwgt = \eig_1^\inv\max(2 - \stepsize, \stepsize)$
for any  $\stepsize \in (0,2)$. 
%
%
%
\\
\textbf{Proof for condition II.} $(\Bmat \Amat)^2$ does not change the number of positive eigenvalues, so the rank is unchanged.
\\
\textbf{Proof for condition III}. To show the nullspaces of $\Bmat\Amat$ and $\Amat$ are equal, 
it is sufficient to prove $\Bmat \Amat \wvec = \zerovec$ if and only if $\Amat \wvec = \zerovec$.
$\Bmat = \Qmat (\stepsize \Lambdamat_\rdim + \regwgt \eye) \Qmat^\inv$,
is invertible because $\regwgt > 0$ and $\eig_{\eigiter} \ge 0$.
For any $\wvec \in \text{nullspace}(\Amat)$, we get $\Bmat \Amat \wvec = \Bmat \zerovec = \zerovec$, and
so $\wvec \in \text{nullspace}(\Bmat \Amat)$.
For any $\wvec \in \text{nullspace}(\Bmat\Amat)$, $\Bmat\Amat \wvec = \zerovec \implies \Amat \wvec = \Bmat^\inv\zerovec =\zerovec$, and so $\wvec \in \text{nullspace}(\Amat)$.
\\\\
\textbf{Convergence rate.}
Assume $\wvec_0 = \zerovec$. On each step, we update with
$\wvec_{t+1} = (\eye - \Bmat\Amat) \wvec_t + \Bmat \bvec = \sum_{\sampiter=0}^{t-1} (\eye - \Bmat \Amat)^\sampiter \Bmat \bvec$. This can be verified inductively, where 
\begin{align*}
\wvec_{t+1} &= (\eye - \Bmat\Amat) \sum_{\sampiter=0}^{\nsamples-2} (\eye - \Bmat \Amat)^\sampiter \Bmat \bvec + (\eye - \Bmat\Amat)^0\Bmat \bvec \\
&=\sum_{\sampiter=0}^{\nsamples-1} (\eye - \Bmat \Amat)^\sampiter \Bmat \bvec.
\end{align*}
\newcommand{\Lamsystem}{\bar{\Lambdamat}}
\newcommand{\lamsystem}{\bar{\lambda}}
For $\Lamsystem = \eye - \stepsize \eye_\rdim - \regwgt \Lambdamat$, because $(\eye - \Bmat \Amat)^\sampiter = 
\Qmat \Lamsystem^\sampiter \Qmat^\inv$,
\begin{align*}
\wvec_t &= \Qmat \left(\sum_{\sampiter=0}^{t-1} \Lamsystem^\sampiter \right) \Qmat^\inv \Qmat (\stepsize\Lambdamat_\rdim^\pinv + \regwgt \eye)\Qmat^\inv \bvec\\
&= \Qmat \left(\sum_{\sampiter=0}^{\nsamples-1} \Lamsystem^\sampiter \right) (\stepsize\Lambdamat_\rdim^\pinv + \regwgt \eye)\Qmat^\inv \bvec
\end{align*}
and because $\wvec_\nsamples \rightarrow \wvec^\star$, 
\begin{align*}
\| \wvec_t &- \wvec^\star \| = \|\Qmat \left(\sum_{\sampiter=0}^\infty \Lamsystem^\sampiter - \sum_{\sampiter=0}^t \Lamsystem^\sampiter \right) (\stepsize\Lambdamat_\rdim^\pinv + \regwgt \eye)\Qmat^\inv \bvec \|\\
&= \|\Qmat \Lamsystem_t (\stepsize\Lambdamat_\rdim^\pinv + \regwgt \eye)\Qmat^\inv \bvec \| \ \ \ \ \  \triangleright \Lamsystem_{t}(\eigiter,\eigiter) \defeq  \frac{\lamsystem_\eigiter^{t}}{1-\lamsystem_\eigiter}\\
&\le \|\Qmat \|  \| \Lamsystem_t (\stepsize\Lambdamat_\rdim^\pinv + \regwgt \eye)\Qmat^\inv \bvec \|
\end{align*}
where $ \|\Qmat \| \le 1$ because $\Qmat$ has normalized columns.

For $\eigiter = 1, \ldots, \rdim$, we have that the magnitude of the values in $\Lamsystem_t (\stepsize\Lambdamat_\rdim^\pinv + \regwgt \eye)$ are
\begin{align*}
\frac{(1-\stepsize - \regwgt \eig_{\eigiter})^{t}}{\stepsize + \regwgt\eig_{\eigiter}}(\stepsize \eig_{\eigiter}^\inv + \eta) 
= \frac{(1-\stepsize - \regwgt \eig_{\eigiter})^{t}}{\eig_{\eigiter}}
.
\end{align*}
For $\eigiter = \rdim, \ldots, \rankA$, we get
$\frac{(1 - \eta \eig_{\eigiter})^{t}}{\eig_{\eigiter}}$.

Under the discrete Picard condition, $|(\Qmat^\inv \bvec)_\eigiter | \le \eig_\eigiter^\picardpower$
and so the denominator $\eig_{\eigiter}$ cancels, giving the desired result.
\end{proof}

\vspace{0.2cm}
This theorem gives insight into the utility of ATD for speeding convergence,
as well as the effect of $\rdim$.
Consider TD($\lambda$), which has positive definite $\Amat$ in on-policy learning \cite[Theorem 2]{sutton1988learning}.
The above theorem guarantees ATD convergences to the TD fixed-point, for any $\rdim$.
For $\rdim = 0$, the expected ATD update is exactly the expected TD update.
Now, we can compare the convergence rate of TD and ATD, using the above convergence rate.

Take for instance the setting $\stepsize = 1$ for ATD, which is common for second-order methods
and let $\picardpower=2$. 
The rate of convergence reduces to the maximum of
$\max_{\eigiter\in  \{1, \ldots, \rdim\}} \regwgt^{t} \eig_{\eigiter}^{t+1}$ and 
$\max_{\eigiter\in  \{\rdim+1, \ldots, \rankA \}} |1 - \regwgt \eig_{\eigiter}|^{t} \eig_{\eigiter}$.
In early learning, the convergence rate for TD is dominated by $|1-\regwgt \eig_1|^t \eig_1$,
because $\eig_\eigiter$ is largest relative to $|1-\regwgt \eig_\eigiter|^t$ for small $t$.
ATD, on the other hand, for a larger $\rdim$, can pick a smaller $\regwgt$ 
and so has a much smaller value for $\eigiter=1$, i.e., $\regwgt^t \eig_1^{t+1}$,
and $|1-\regwgt \eig_\eigiter|^t \eig_\eigiter$ is small because $\eig_\eigiter$ is small for $\eigiter > \rdim$.
As $\rdim$ gets smaller, $|1-\regwgt \eig_{\rdim+1}|^t \eig_{\rdim+1}$ becomes larger,
slowing convergence. For low-rank domains, however, $\rdim$ could be quite
small and the preconditioner could still improve the convergence rate in early learning---potentially significantly outperforming TD.  

ATD is a quasi-second order method, meaning sensitivity to parameters should
be reduced and thus it should be simpler to set the parameters.
The convergence rate provides intuition that, for reasonably chosen $\rdim$,
the regularizer $\regwgt$ should be small---smaller than a typical stepsize for TD. 
Additionally, because ATD is a stochastic update,
not the expected update, we make use of typical conventions from  
stochastic gradient descent to set our parameters. We set $\stepsize_t = \frac{\alpha_0}{t}$, 
as in previous stochastic second-order methods \cite{schraudolph2007astochastic},
where we choose $\alpha_0 = 1$ and
set $\regwgt$ to a small fixed value. 
Our choice for $\regwgt$ represents a small final step-size,
as well as matching the convergence rate intuition.



\paragraph{On the bias of subquadratic methods.}
The ATD($\LL$) update was derived to ensure convergence to the minimum of the MSPBE,
either for the on-policy or off-policy setting.
Our algorithm summarizes past information, in $\Arank$, to improve the convergence rate,
without requiring quadratic computation and storage.
Prior work aspired to the same goal, however, the resultant algorithms are biased.
The iLSTD algorithm can be shown to converge for a specific class of feature selection mechanisms \cite[Theorem 2]{geramifard2007ilstd};
this class, however, does not include the greedy mechanism that is used in iLSTD algorithm to select a descent direction. 
The random projections variant of LSTD \citep{ghavamzadeh2010lstd} can significantly reduce the computational complexity compared with conventional LSTD,
with projections down to size $\rdim$, 
but the reduction comes at a cost of an increase in the approximation error \citep{ghavamzadeh2010lstd}.
Fast LSTD \cite{prashanth2013fast} does randomized TD updates on a batch of data; this algorithm could be run incrementally with O($\xdim \rdim$) by using mini-batches of size $\rdim$. 
Though it has a nice theoretical characterization, this algorithm is restricted to $\lambda = 0$.
Finally, the most related algorithm is tLSTD, which also uses a low-rank approximation to $\Amat$.

In ATD $\Arank_t$ is used very differently, from how $\Arank_t$ is used in tLSTD.
The tLSTD algorithm uses a similar approximation $\Arank_t$ as ATD, but tLSTD uses it to compute a closed form solution
$\wvec_t = \Arank_t^\pinv \bvec_t$, and thus is biased \cite[Theorem 1]{gehring2016incremental}. In fact, the bias grows with decreasing $\rdim$,
proportionally to the magnitude of the $\rdim$th largest singular value of $\Amat$. 
In ATD, the choice of $\rdim$ is decoupled from the fixed point, and 
so can be set to balance learning speed and computation with no fear of asymptotic bias.

\section{Empirical Results}
All the following experiments investigate the on-policy setting, and thus we make use of the standard version of ATD for simplicity. Future work will explore off-policy domains with the emphatic update.
The results presented in this section were generated over 756 thousand individual experiments run on three different domains. 
Due to space constraints detailed descriptions of each domain, error calculation, and all other parameter settings are discussed in detail in the appendix. We included a wide variety of baselines in our experiments, additional related baselines excluded from our study are also discussed in the appendix.

\setlength{\belowcaptionskip}{-13pt}
\setlength{\abovecaptionskip}{0pt}

    \begin{figure}[htp]
    \hspace{-0.93cm}
     \includegraphics[width=0.57\textwidth]{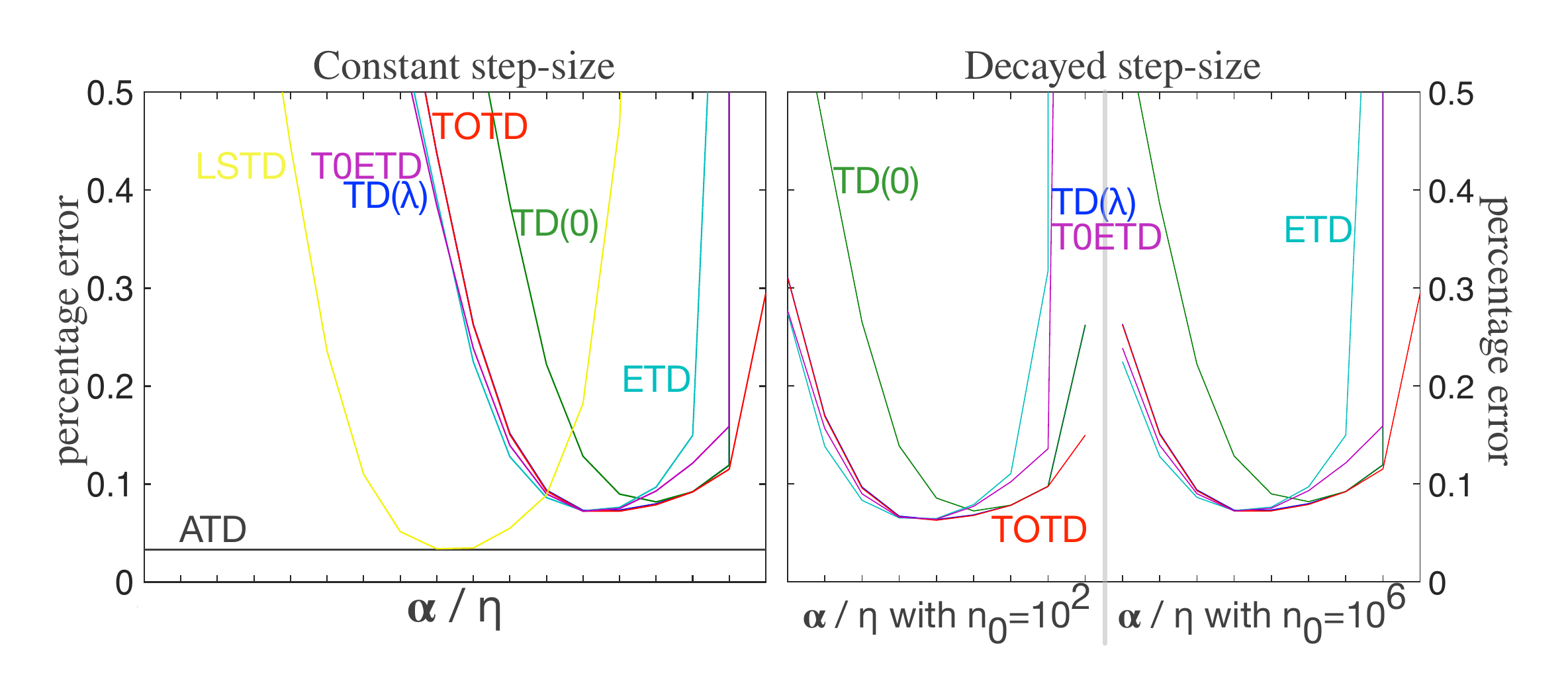}
     \vspace{-0.5cm}
\caption{\label{boyan} \small Parameter sensitivity in Boyan's chain with constant step-size (LHS) and decayed step-sizes (RHS). In the plots above, each point summarizes the mean performance (over 1000 time steps) of an algorithm for one setting of $\alpha$ for linear methods, or $\eta$ for LSTD, and $\alpha/100$ regularizer for ATD, using percentage error compared to the true value function. In the decayed step-size case, where $\alpha_t = \alpha_0\frac{n_0 + 1}{n_0 + \text{episode\#}}$, 18 values of $\alpha_0$ and two values of $n_0$ were tested---corresponding to the two sides of the RHS graph. The LSTD algorithm (in yellow) has no parameters to decay. Our ATD algorithm (in black) achieves the lowest error in this domain, and exhibits little sensitivity to it's regularization parameter (with step-size as $\stepsize_t = \tfrac{1}{t}$ across all experiments).}
\end{figure} 

Our first batch of experiments were conducted on Boyan's chain---a domain known to elicit the strong advantages of LSTD($\LL$) over TD($\LL$). In Boyan's chain the agent's objective is to estimate the value function based on a low-dimensional, dense representation of the underlying state (perfect representation of the value function is possible). The ambition of this experiment was to investigate the performance of ATD in a domain where the pre-conditioner matrix is full rank; no rank truncation is applied. We compared five linear-complexity methods (TD(0), TD($\LL$), true online TD($\LL$), ETD($\LL$), true online ETD($\LL$)), against LSTD($\LL$) and ATD, reporting the percentage error relative to the true value function over the first 1000 steps, averaged over 200 independent runs. We swept a large range of step-size parameters, trace decay rates, and regularization parameters, and tested both fixed and decaying step-size schedules. Figure \ref{boyan} summarizes the results.

Both LSTD($\LL$) and ATD achieve lower error compared to all the linear baselines---even thought each linear method was tuned using 864 combinations of step-sizes and $\LL$. In terms of sensitivity, the choice of step-size for TD(0) and ETD exhibit large effect on performance (indicated by sharp valleys), whereas true-online TD($\LL$) is the least sensitive to learning rate. LSTD($\LL$) using the Sherman-Morrison update (used in many prior empirical studies) is sensitive to the regularization parameter; the parameter free nature of LSTD may be slightly overstated in the literature.\footnote{We are not the first to observe this. \citet{sutton2016reinforcement} note that $\eta$ plays a role similar to the step-size for LSTD.}

    \begin{figure}[htp!]
    \hspace{-0.95cm}
     \includegraphics[width=0.57\textwidth]{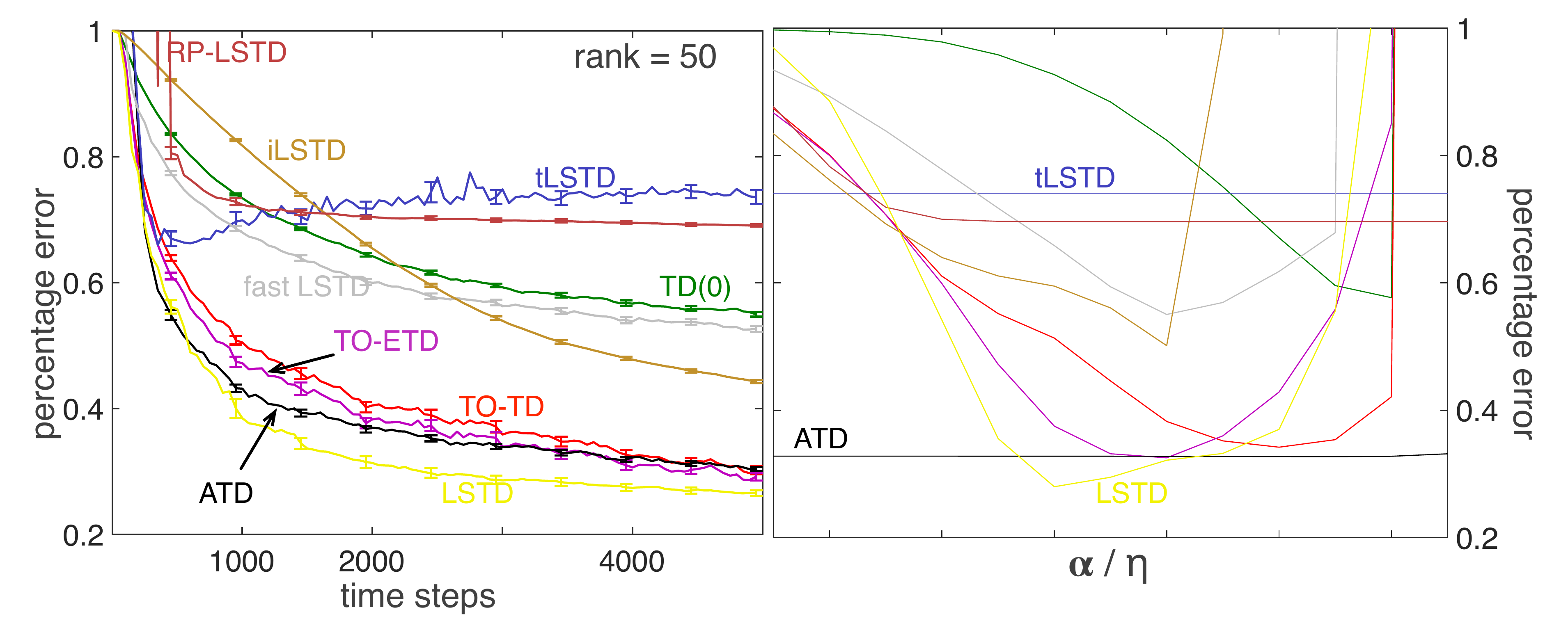}
     \caption{\label{mcar} \small The learning curves (LHS) are percentage error versus time steps averaged over 100 runs of ATD with rank 50, LSTD and several baselines described in text. 
     The sensitivity plot (RHS) is with respect to the learning rate of the linear methods, and regularization parameter of the matrix methods. The tLSTD algorithm has no parameter besides rank, while ATD has little sensitivity to it's regularization parameter.
     }
\end{figure}

Our second batch of experiments investigated characteristics of ATD in a classic benchmark domain with a sparse high-dimensional feature representation where perfect approximation of the value function is not possible---Mountain car with tile coding. The policy to be evaluated stochastically takes the action in the direction of the sign of the velocity, with performance measured by computing a truncated Monte Carlo estimate of the return from states sampled from the stationary distribution (detailed in the appendix). We used a fine grain tile coding of the the 2D state, resulting in a 1024 dimensional feature representation with exactly 10 units active on every time step. We tested TD(0), true online TD($\LL$) true online ETD($\LL$), and sub-quadratic methods, including iLSTD, tLSTD, random projection LSTD, and fast LSTD \cite{prashanth2013fast}. As before a wide range of parameters ($\alpha, \lambda, \eta$) were swept over a large set. Performance was averaged over 100 independent runs. A fixed step-size schedule was used for the linear TD baselines, because that achieved the best performance. The results are summarized in figure \ref{mcar}.

LSTD and ATD exhibit faster initial learning compared to all other methods. This is particularly impressive since $\rdim$ is less than 5\% of the size of $\Amat$. Both fast LSTD and projected LSTD perform considerably worse than the linear TD-methods, while iLSTD exhibits high parameter sensitivity. tLSTD has no tunable parameter besides $\rdim$, but performs  poorly due to the high stochasticity in the policy---additional experiments with randomness in action selection of 0\% and 10\% yielded better performance for tLSTD, but never equal to ATD. The true online linear methods perform very well compared to ATD, but this required sweeping hundreds of combinations of $\alpha$ and $\lambda$, whereas ATD exhibited little sensitivity to it's regularization parameter (see Figure \ref{mcar} RHS); ATD achieved excellent performance with the same parameter setting as we used in Boyan's chain.\footnote{For the remaining experiments in the paper, we excluded the TD methods without true online traces because they perform worse than their true online counterparts in all our experiments. This result matches the results in the literature \citep{vanseijen2016true}.}  

\begin{figure}[htp!]
    \hspace{-0.93cm}
     \includegraphics[width=0.57\textwidth]{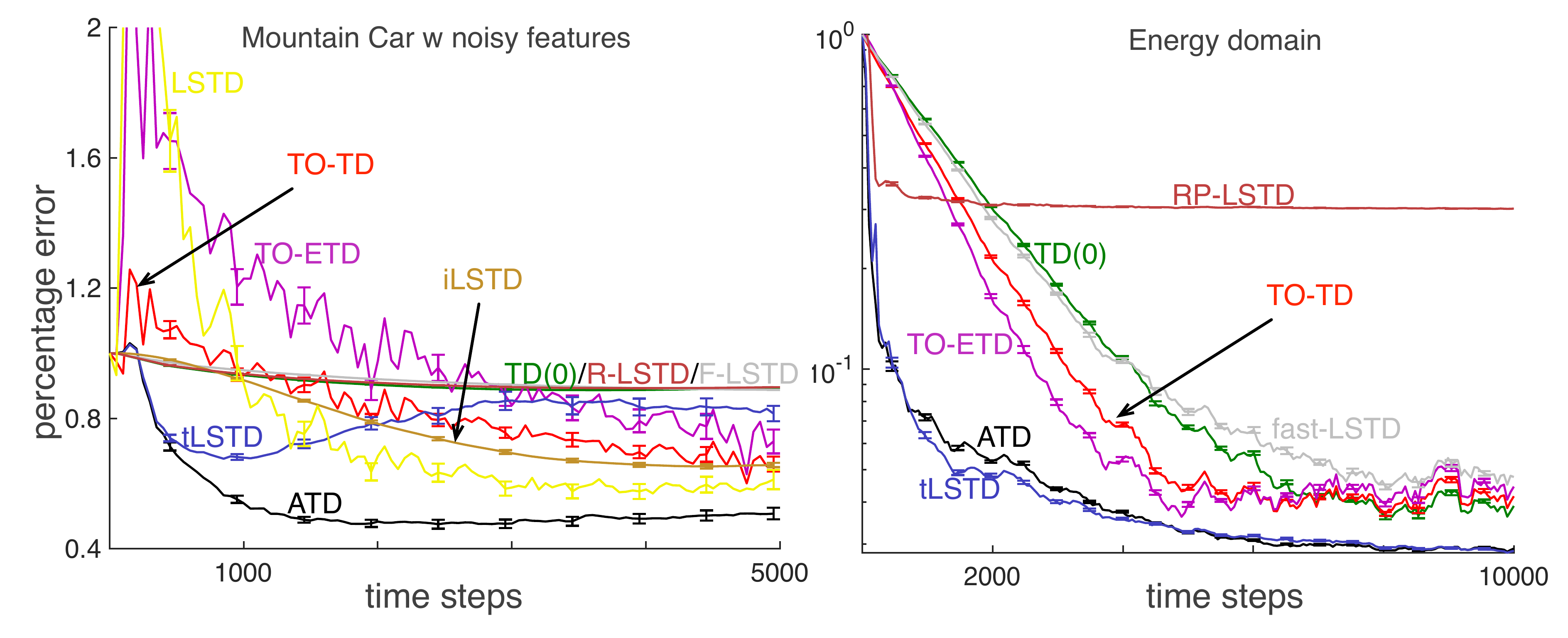}
     \caption{\label{energy} \small Learning curves on Mountain Car with noisy features (LHS) and on Energy allocation (RHS), in logscale.}
\end{figure}

We ran an additional experiment in Mountain car to more clearly exhibit the benefit of ATD over existing methods. We used the same setting as above, except that 100 additional features were added to the feature vector, with 50 of them randomly set to one and the rest zero. This noisy feature vector is meant to emulate a situation such as a robot that has a sensor that becomes unreliable, generating noisy data, but the remaining sensors are still useful for the task at hand. The results are summarized in Figure \ref{energy}. Naturally all methods are adversely effected by this change, however ATD's low rank approximation enables the agent to ignore the unreliable feature information and learn efficiently. tLSTD, as suggested by our previous experiments does not seem to cope well with the increase in stochasticity.  

Our final experiment compares the performance of several sub-quadratic complexity policy evaluation methods in an industrial energy allocation simulator with much larger feature dimension (see Figure \ref{energy}). As before we report percentage error computed from Monte Carlo rollouts, averaging performance over 50 independent runs and selecting and testing parameters from an extensive set (detailed in the appendix). The policy was optimized ahead of time and fixed, and the feature vectors were produced via tile coding, resulting in an 8192 dimensional feature vector with 800 units active on each step. Although the feature dimension here is still relatively small, a quadratic method like LSTD nonetheless would require over 67 million operations per time step, and thus methods that can exploit low rank approximations are of particular interest. The results indicate that both ATD and tLSTD achieve the fastest learning, as expected. The instrinsic rank in this domain appears to be small compared to the feature dimension---which is exploited by ATD and tLSTD with $r = 40$---while the performance of tLSTD indicates that the domain exhibits little stochasticity. The appendix contains additional results for this domain---in the small rank setting ATD significantly outperforms tLSTD.

\section{Conclusion and future work}

In this paper, we introduced a new family of TD learning algorithms
that take a fundamentally different approach from previous incremental TD algorithms.
The key idea is to use a preconditioner on the temporal difference update, similar to a quasi-Newton stochastic gradient descent update. 
We prove that the expected update is consistent, 
and empirically demonstrated improved learning speed and parameter insensitivity, 
even with significant approximations in the preconditioner.

This paper only begins to scratch the surface of potential preconditioners for ATD.  
There remains many avenues to explore the utility of other preconditioners, 
such as diagonal approximations, eigenvalues estimates, other matrix factorizations
and approximations to $\Amat$ that are amenable to inversion. 
The family of ATD algorithms provides a promising avenue for more effectively using results for stochastic gradient descent to improve sample complexity, with feasible computational complexity. 


\bibliographystyle{aaai}
\bibliography{paper}

\newpage

\algnewcommand{\LeftComment}[1]{\State \(\triangleright\) #1}

\appendix

\section{Convergence proof}

%
For the more general setting, where $\Mweight$ can also equal $\Dmu$, we redefine the rank-$\rdim$ approximation.
We say the rank-$\rdim$ approximation $\Arank$ to $\Amat$ is composed of eigenvalues $\{\eig_{i_1}, \ldots, \eig_{i_\rdim}\} \subseteq \{\eig_1, \ldots, \eig_\xdim\}$  if 
$\Arank = \Qmat \Lambdamat_\rdim \Qmat^\inv$ for
diagonal $\Lambdamat_\rdim \in \RR^{\xdim \times \xdim}$,
$\Lambdamat(i_j, i_j) = \eig_{i_j}$ for $j = 1, \ldots, \rdim$,
and zero otherwise. 
\begin{theorem} \label{thm_general}
Under Assumptions 1 and 2, let $\Arank$ be the rank-$\rdim$ approximation composed of eigenvalues $\{\eig_{i_1}, \ldots, \eig_{i_\rdim}\} \subseteq \{\eig_1, \ldots, \eig_\xdim\}$.
If $\eig_\xdim \ge 0$ or $\{\eig_{i_1}, \ldots, \eig_{i_\rdim}\}$ contains all the negative
eigenvalues in $\{\eig_1, \ldots, \eig_\xdim\}$,
then the expected updating rule in \eqref{expected2ndorder} converges to the fixed-point  $\wvec^\star = \Api^\pinv \bpi$.
\end{theorem}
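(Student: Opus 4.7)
The plan is to mirror the proof of Theorem~\ref{thm_main}, again invoking the stationary-iteration result of Shi et al., but with the preconditioner now allowed to pick out an arbitrary (not necessarily leading) subset of eigenvalues. Writing the expected update as $\wvec_t = (\eye - \Bmat\Amat)\wvec_{t-1} + \Bmat\bvec$ with $\Bmat = \stepsize \Arank^\pinv + \regwgt\eye$, the same diagonalization gives $\Bmat\Amat = \Qmat(\stepsize \eye_{\{i_1,\ldots,i_\rdim\}} + \regwgt\Lambdamat)\Qmat^\inv$, where $\eye_{\{i_1,\ldots,i_\rdim\}}$ is the diagonal indicator matrix that is $1$ on the selected coordinates and $0$ elsewhere. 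Consistency $\bpi \in \text{range}(\Api)$ and well-definedness of $\wvec^\star = \Api^\pinv \bpi$ follow exactly as in Theorem~\ref{thm_main} because we are still minimizing a projected Bellman error.

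The argument for Conditions~II and~III carries over verbatim: the diagonal matrix $\stepsize\eye_{\{i_1,\ldots,i_\rdim\}} + \regwgt\Lambdamat$ is invertible on the non-null coordinates of $\Lambdamat$ (since $\regwgt>0$) and vanishes on $\nulls(\Amat)$, so $\nulls(\Bmat\Amat)=\nulls(\Amat)$ and $\rank(\Bmat\Amat)=\rank((\Bmat\Amat)^2)$. The substantive step is Condition~I, which I would establish by a case split on the diagonal of $\eye - \stepsize\eye_{\{i_1,\ldots,i_\rdim\}} - \regwgt\Lambdamat$. For $\eigiter \in \{i_1,\ldots,i_\rdim\}$ the entry is $1-\stepsize-\regwgt\eig_\eigiter$, which can be forced strictly into $(-1,1)$ by choosing $\regwgt$ small relative to $|\eig_\eigiter|$ regardless of the sign of $\eig_\eigiter$; this is an adaptation of the Assumption~2 bound, now using $|\eig_1|$ in place of $\eig_1$ to absorb possibly negative eigenvalues.

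For $\eigiter \notin \{i_1,\ldots,i_\rdim\}$ with $\eig_\eigiter\neq 0$ the entry is $1-\regwgt\eig_\eigiter$, and this lies in $(-1,1)$ only when $\eig_\eigiter>0$. This is precisely the point at which the hypothesis of Theorem~\ref{thm_general} is invoked: either $\eig_\xdim \ge 0$ (so no non-selected eigenvalue can be negative), or the index set $\{i_1,\ldots,i_\rdim\}$ already contains every negative eigenvalue, so again every non-selected $\eig_\eigiter$ is non-negative and Condition~I holds. On $\nulls(\Amat)$ the entry equals $1$, which is permitted by Shi et al.'s theorem and corresponds to the iterates being invariant along those directions, consistent with convergence to the minimum-norm solution $\Api^\pinv \bpi$.

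The main obstacle, relative to Theorem~\ref{thm_main}, is that $\Amatf{\dmu}$ is not in general positive semi-definite off-policy, so $\Amat$ can have negative eigenvalues. Such directions are fatal to Condition~I unless they are explicitly neutralized by the $\stepsize$-term contributed by $\Arank^\pinv$, which is why the hypothesis forces every negative eigenvalue into the chosen subset. Once this combinatorial constraint is in place, the remaining rank, nullspace, and spectral-radius arguments are a direct and routine adaptation of those in the proof of Theorem~\ref{thm_main}.
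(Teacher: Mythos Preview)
Your proposal is correct and follows essentially the same route as the paper: invoke the stationary-iteration theorem of Shi et al.\ with $\Bmat = \stepsize\Arank^\pinv + \regwgt\eye$, diagonalize $\Bmat\Amat$ via $\Qmat$, and verify Conditions I--III, with the theorem's hypothesis on negative eigenvalues used exactly where you indicate---to force every unselected diagonal entry $1-\regwgt\eig_\eigiter$ into $(-1,1)$. One small point to tighten: your justification for Condition~III (``invertible on the non-null coordinates since $\regwgt>0$'') is not sufficient when a \emph{selected} eigenvalue $\eig_{i_\ell}$ is negative, because then the corresponding diagonal entry of $\stepsize\eye_{\{i_1,\ldots,i_\rdim\}} + \regwgt\Lambdamat$ is $\stepsize + \regwgt\eig_{i_\ell}$, which $\regwgt>0$ alone does not prevent from vanishing. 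The paper closes this by observing that the Assumption~2 bound on $\regwgt$ forces $\regwgt < \stepsize/|\eig_{i_\ell}|$, hence $\stepsize + \regwgt\eig_{i_\ell} > 0$; you should make the same appeal rather than relying only on $\regwgt>0$.
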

\begin{proof} 
We use a general result about stationary iterative methods \citep{shi2011convergence}, which is applicable to the case where
$\Api$ is not full rank.
\citep[Theorem 1.1]{shi2011convergence} states that given a singular and consistent linear system $\Amat \wvec = \bvec$
%
%
where $\bvec$ is in the range of $\Amat$, the stationary iteration with $\wvec_0 \in \RR^\xdim$ for $t = 1, 2, \ldots$
\begin{align}
\tag{\ref{iterequ}}
\wvec_\sampiter = (\eye - \Bmat \Amat) \wvec_{t-1} + \Bmat \bvec
\end{align}
%
converges to the solution $\wvec = \Amat^\pinv \bvec$ 
if and only if the following three conditions are satisfied. 
\begin{enumerate}[leftmargin=*]
\item[] Condition {\bf I}: the  eigenvalues of $\eye - \Bmat \Amat$ are equal to 1 or have absolute value strictly less than 1. 
\item[] Condition {\bf II}: $\text{rank}(\Bmat \Amat) = \text{rank} [(\Bmat \Amat)^2]$.
\item[] Condition {\bf III}: the null space $\mathcal{N}(\Bmat \Amat) = \mathcal{N} (\Amat)$.
\end{enumerate}
We verify these conditions to prove the result. 
First, because we are using the projected Bellman error, we know that $\bpi$ is in the range of
$\Api$ and the system is consistent:
there exists $\wvec$ s.t. $\Api \wvec = \bpi$. 

To rewrite our updating rule \eqref{expected2ndorder} to be expressible in terms of \eqref{iterequ}, let  
$\Bmat = \stepsize \Arank^{\pinv} + \regwgt \eye$, giving
\begin{align}
\Bmat \Amat &= \stepsize \Arank^{\pinv} \Amat + \regwgt \Amat \nonumber\\
&= \stepsize \Qmat \Lambdamat_\rdim^\pinv \Qmat^{-1}\Qmat \Lambdamat \Qmat^{-1} + \regwgt \Qmat \Lambdamat \Qmat^{-1} \nonumber\\
&= \stepsize \Qmat \eye_\rdim \Qmat^{-1} + \regwgt \Qmat \Lambdamat \Qmat^{-1} \nonumber\\
&= \Qmat (\stepsize \eye_\rdim + \regwgt  \Lambdamat) \Qmat^{-1} \tag{\ref{diagAr}}
\end{align}
where $\eye_\rdim$ is a diagonal matrix with the indices $i_1, \ldots, i_\rdim$ set to 1, and the rest zero. 
\\\\
\textbf{Proof for condition I}. Using \eqref{diagAr}, 
$\eye - \Bmat \Amat = \Qmat (\eye - \stepsize \eye_\rdim - \regwgt \Lambdamat )\Qmat^\inv$.
To bound the maximum absolute value in the diagonal matrix $\eye - \stepsize \eye_\rdim - \regwgt \Lambdamat$,
we consider  eigenvalue $\eig_j$ in $\Lambdamat$, and address three cases. 

Case 1: $j \in \{i_1, \ldots, i_\rdim\}$, $\eig_j \ge 0$:
\begin{align*}
&|1-\stepsize - \regwgt \eig_{\eigiter} | \hspace{1.3cm} \triangleright \text{ for } 0 < \regwgt < \max\left(\frac{2 - \stepsize}{\eig_1}, \frac{\stepsize}{\eig_1} \right)\\ 
&<  \max(| 1-\stepsize |, |1 - \stepsize - (2-\stepsize)|, |1 - \stepsize -\stepsize|)\\
&=  \max(| 1-\stepsize |, 1, 1) 
< 1 \hspace{1.5cm}  \triangleright \text{ because } \stepsize \in (0,2)
.
\end{align*}

Case 2: $j \in \{i_1, \ldots, i_\rdim\}$, $\eig_j < 0$:  $|1 - \stepsize - \regwgt \eig_i | = |1 -\stepsize + \regwgt |\eig_i| | < 1$ if 
$0 \le 1 -\stepsize + \regwgt |\eig_i| < 1 \implies \regwgt < \stepsize/ |\eig_i|$. 

Case 3: $j \notin \{i_1, \ldots, i_\rdim\}$. For this case, $\eig_j \ge 0$, by assumption, as $\{i_1, \ldots, i_\rdim\}$ contains
the indices for all negative eigenvalues of $\Amat$. So $|1 - \regwgt \eig_i | < 1$ if 
$0 < \regwgt < 2/ \eig_i$.

%

All three cases are satisfied by the assumed $\stepsize \in (0,2)$ and $\regwgt \le \eigmax^\inv\max(2 - \stepsize, \stepsize)$.
Therefore, the absolute value of the eigenvalues of $\eye - \Bmat \Amat$ are all less than 1 and so the first condition holds. 
\\\\
\textbf{Proof for condition II.} $(\Bmat \Amat)^2$ does not change the number of positive eigenvalues, so the rank is unchanged.
\begin{align*}
\Bmat \Amat &=  \Qmat (\stepsize \eye_\rdim + \regwgt  \Lambdamat) \Qmat^{-1} \\
(\Bmat \Amat)^2  &= \Qmat (\stepsize \eye_\rdim + \regwgt \Lambdamat )\Qmat^\inv \Qmat (\stepsize \eye_\rdim + \regwgt \Lambdamat )\Qmat^\inv \\
&= \Qmat (\stepsize \eye_\rdim + \regwgt \Lambdamat )^2 \Qmat^\inv
\end{align*}
\\
\textbf{Proof for condition III}. To show that the nullspaces of $\Bmat\Amat$ and $\Amat$ are equal, 
it is sufficient to prove $\Bmat \Amat \wvec = \zerovec$ if and only if $\Amat \wvec = \zerovec$.
Because $\Bmat = \Qmat (\stepsize \Lambdamat_\rdim + \regwgt \eye) \Qmat^\inv$,
we know that $\Bmat$ is invertible as long as $\stepsize \neq -\regwgt\eig_j$. Because $\regwgt > 0$,
this is clearly true for $\eig_j \ge 0$ and also true for $\eig_j < 0$ because $\regwgt$ is strictly 
less than $\stepsize/|\eig_j|$. 
For any $\wvec \in \text{nullspace}(\Amat)$, we get $\Bmat \Amat \wvec = \Bmat \zerovec = \zerovec$, and
so $\wvec \in \text{nullspace}(\Bmat \Amat)$.
For any $\wvec \in \text{nullspace}(\Bmat\Amat)$, we get $\Bmat\Amat \wvec = \zerovec \implies \Amat \wvec = \Bmat^\inv\zerovec =\zerovec$, and so $\wvec \in \text{nullspace}(\Amat)$,
completing the proof.
\end{proof}

With $\rdim = \xdim$, the update is a gradient descent update on the MSPBE,
and so will converge even under off-policy sampling. As $\rdim << \xdim$, 
the gradient is only approximate and theoretical results about (stochastic) gradient
descent no longer obviously apply. For this reason, we use the iterative update analysis above to
understand convergence properties. 
Iterative updates for the full expected update, with preconditioners, have been studied
in reinforcement learning (c.f. \citep{wang2013ontheconvergence});
however, they typically analyzed different preconditioners, as they had no
requirements for reducing computation below quadratic computation.
For example, they consider a regularized preconditioner $\Bmat = (\Amat + \regwgt \eye)^\inv$,
which is not compatible with an incremental singular value decomposition
and, to the best of our knowledge, current iterative eigenvalue decompositions require symmetric matrices. 

The theorem is agnostic to what components of $\Amat$ are approximated
by the rank-$\rdim$ matrix $\Arank$. In general, a natural choice, particularly in on-policy learning
or more generally with a positive definite $\Amat$,
is to select the largest magnitude eigenvalues of $\Amat$, which contain the most significant information about
the system and so are likely to give the most useful curvature information.
However, $\Arank$ could also potentially be chosen to obtain convergence for
off-policy learning with $\Mweight = \dmu$, where $\Amat$ is not necessarily positive semi-definite. 
This theorem indicates that if the rank $\rdim$ approximation 
$\Arank$ contains the negative eigenvalues of $\Amat$, even if it does not
contain the remaining information in $\Amat$, then we obtain convergence under off-policy sampling.
We can of course use the emphatic weighting more easily for off-policy learning,
but if the weighting $\Mweight = \dmu$ is desired rather than $\Mweightetd$,
then carefully selecting $\Arank$ for ATD enables that choice.

\begin{algorithm*}[htp!]
	\caption{Accelerated Temporal Difference Learning}
	\label{alg_atd}
	\begin{algorithmic}
		\LeftComment{where $\Umat_0 = [], \Vmat_0 = [], \Sigmamat_0 = [], \bvec_0 = \vec{\zerovec}, \evec_0 = \vec{\zerovec},$~ initialized $\wvec_0$~arbitrarily} 
		\State
		\Function{ATD}{$\rdim, \regwgt, \LL$}
		\State  $\xvec_0 =$ first observation 
		\State $\regwgt = $ a small final stepsize value, e.g., $\regwgt = 10e-4$ 
		\For {$t=0,1,2,...$} 
		\State In $\xvec_t$, select action $A_t~\sim\pi$, observe $\xvec_{t+1}$, reward $r_{t+1}$, discount $\gamma_{t+1}$ (could be zero if terminal state)
		\State $\decay_t = 1/(t+1)$
		\State $\delta_t = r_{t+1} + \gamma_{t+1}\wvec^\top \xvec_{t+1} - \wvec^\top \xvec_t$
		\State $\evec_t =$~ \Call{trace\_update}{$\evec_{t-1}, \xvec_t, \gamma_t, \lambda_t$} \Comment{or call \Call{empahtic\_trace\_update}{} to use emphatic weighting}
		\LeftComment{$\Umat_{t}\Sigmamat_{t}\Vmat^\top_{t} = (1-\decay_t)\Umat_{t-1}\Sigmamat_{t-1}\Vmat^\top_{t-1} + \decay_t\evec_t (\xvec_t - \gamma_{t+1} \xvec_{t+1})^\top$} 
		\State $[\Umat_{t},\Sigmamat_{t} , \Vmat_{t}] \!=\! $ \Call{svd-update}{$\Umat_{t-1},(1-\decay_t)\Sigmamat_{t-1} ,\Vmat_{t-1},\sqrt{\decay_t}\evec_t, \sqrt{\decay_t}(\xvec_t \!-\! \gamma_{t+1} \xvec_{t+1}),\rdim}$ \Comment{see \cite{gehring2016incremental}}
		\LeftComment{Ordering of matrix operations important, first multiply $\Umat^\top_{t} \evec_t$ in $\mathcal{O}(\xdim \rdim)$ time}
		\LeftComment{to get a new vector, then by $\Sigmamat^\pinv_{t}$ and $\Vmat_{t}$ to maintain only matrix-vector multiplications}
		\State $\wvec_{t+1} = \wvec_{t} +  (\tfrac{1}{t+1}\Vmat_{t} \Sigmamat^\pinv_{t} \Umat^\top_{t} +\regwgt \eye) ( \delta_t \evec_t)$	\Comment{where $\Sigmamat^\pinv_{t} = \diag(\hat{\sigma}_1^\inv, \ldots, \hat{\sigma}_\rdim^\inv, \zerovec)$}	
		\EndFor
		\EndFunction
	\end{algorithmic}
\end{algorithm*}

\section{Algorithmic details}

In this section, we outline the implemented ATD($\LL$) algorithm.
The key choices are how to update the approximation to $\Arank$,
and how to update the eligibility trace to obtain different variants of TD.
We include both the conventional and emphatic trace updates
in Algorithms 2 and 3 respectively. The low-rank update to $\Arank$
uses an incremental singular value decomposition (SVD).
This update to $\Arank$ is the same one used for tLSTD, and
so we refer the reader to  \cite[Algorithm 3]{gehring2016incremental}.
The general idea is to incorporate the rank one update $\evec_t(\xvec_t - \gamma_{t+1} \xvec_{t+1})^\top$
into the current SVD of $\Arank$. In addition, to maintain a normalized $\Arank$,
we multiply by $\beta_t$: 
\begin{align*}
\Arank_{t+1} 
&= (1-\beta_t) \Arank + \beta_t \evec_t(\xvec_t - \gamma_{t+1} \xvec_{t+1})^\top\\
&= \tfrac{t}{t+1} \Arank + \tfrac{1}{t+1} \evec_t(\xvec_t - \gamma_{t+1} \xvec_{t+1})^\top
\end{align*}
Multiplying $\Arank$ by a constant corresponds to multiplying the singular values.
We also find that multiplying each component of the rank one update by $\sqrt{1/(t+1)}$
is more effective than multiplying only one of them by $1/(t+1)$.

\begin{algorithm}[H]
\caption{Conventional trace update for ATD}
\label{conventional_trace}
\begin{algorithmic}
\State
\Function{trace\_update}{$\evec_{t-1}, \xvec_t, \gamma_t, \lambda_t$}
\State \Return $\gamma_t\lambda_t \evec_{t-1} + \xvec_t$
\EndFunction
\end{algorithmic}
\end{algorithm}
\begin{algorithm}[H]
\caption{Emphatic trace update for ATD}
\label{etd_trace}
\begin{algorithmic}
\LeftComment{where  $F_0 \leftarrow 0, \ \ \  M_0 \leftarrow 0$ is initialized globally, before executing the for loop in ATD($\LL$)}
\State
\Function{Emphatic\_trace\_update}{$\evec_{t-1}, \xvec_t, \gamma_t, \lambda_t$}
\State $\rho_t \gets \frac{\pi(s_t,a_t)}{\mu(s_t,a_t)}$ \Comment{Where $\rho_t = 1$ in the on-policy case}
\State $F_t \gets  \rho_{t-1}\gamma_t F_{t-1} + i_t$  \Comment{For uniform interest, $i_t = 1$}
\State $M_t \gets \lambda_t i_t + (1-\lambda_t) F_t$
\State
\State \Return $\rho_t (\gamma_t\lambda_t \evec_{t-1} + M_t\xvec_t)$
\EndFunction
\end{algorithmic}
\end{algorithm}

\section{Detailed experimental specification} 
In both mountain car and energy storage domains we do not have access to the parameter's of the underlying MDPs (as we do in Boyan's chain), and thus must turn to Monte Carlo rollouts to estimate $v_\pi$ in order to evaluate our value function approximation methods. In both domains we followed the same strategy. 

To generate training data we generated 100 trajectories of rewards and observations under the target policy, 
starting in randomly from a small area near a start state. 
Each trajectory is composed of a fixed number of steps, either 5000 or 10000, and, in the case of episodic tasks like mountain car, may contain many episodes. The start states for each trajectory were sampled uniform randomly from (1) near the bottom of the hill with zero velocity for mountain car, (2) a small set of valid start states specified by the energy storage domains \citep{salas2013benchmarking}. Each trajectory represents one independent run of the domain.

The testing data was sampled according to the on-policy distribution induced by the target policy. For both domains we generated a single long trajectory selecting actions according to $\pi$. Then we randomly sampled 2000 states from this one trajectory. In mountain car domain, we ran $500$ Monte Carlo rollouts to compute undiscounted sum of future rewards until termination, and take the average as an estimate true value. In the energy allocation domain, we ran $300$ Monte Carlo rollouts for each evaluation state, each with length $1000$ steps\footnote{After $1000$ steps, for $\gamma = 0.99$, the reward in the return is multiplied by $\gamma^{1000} < 10^{-5}$ and so contributes a negligible amount to the return.}, averaging over $300$ trajectories from each of the evaluation states. We evaluated the algorithms' performances by comparing the agent's prediction value with the estimated value of the $2000$ evaluation states, at every $50$ steps during training. We measured the percentage absolution mean error:

$${\text{error}}(\wvec) = \frac{1}{2000}\sum_{i=1}^{2000}\frac{|\wvec^T \xvec(s_i) - \hat{v}_\pi(s_i)|}{|\hat{v}_\pi(s_i)|} ,$$
where $\hat{v}_\pi(s_i)\in\mathbb{R}$ denotes the Monte Carlo estimate of the value of evaluation state $s_i$.    

\subsection{Algorithms}
The algorithms included in the experiments constitute a wide range of stochastic approximation
algorithms and matrix-based (subquadratic) algorithms. There are a few related algorithms, however,
that we chose not to include; for completeness we explain our decision making here. 

 There have been some accelerations proposed to gradient TD
 algorithms \citep{mahadevan2014proximal,meyer2014accelerated,dabney2014natural}. However, they have either shown
 to perform poorly in practice \citep{white2016investigating}, or were based on applying accelerations outside
 their intended use \citep{meyer2014accelerated,dabney2014natural}.
 \citet{dabney2014natural} explored a similar update to ATD, but for the control setting and with an incremental update to the Fisher information matrix rather than $\Amat$ used here. As they acknowledge, this approach for TD methods is somewhat adhoc, as the typical update is not a gradient, and rather their method
 is better suited for the policy gradient algorithms explored in that paper.
 \citet{meyer2014accelerated} applied an accelerated Nesterov technique, called SAGE, to the two timescale gradient algorithms.
 Their approach does not take advantage of the simpler quadratic form of the MSPBE, and so only uses an approximate Lipschitz constant to improve selection of the stepsize. Diagonal approximations to $\Amat$ constitute a strictly more informative stepsize approach, and we found these to be inferior to our low-rank strategy. The results by \citet{meyer2014accelerated} using SAGE for GTD similarly indicated little to no gain. 
 Finally, \citet{givchi2014quasi} adapted SGD-QN for TD, and showed some improvements using this diagonal step-size approximation. 
  
 On the other hand, the true-online methods have consistently been shown
 to have surprisingly strong performance \citep{white2016investigating},
 and so we opt instead for these practical competitors. 
 
\subsection{Boyan's Chain} 
This domain was implemented exactly as describe in Boyan's paper \citep{boyan1999least}. The task is episodic and the true value function is known, and thus we did not need to compute rollouts. Otherwise evaluation was performed exactly as described above. We tested the following parameter settings:
\begin{itemize} 
\item $\alpha_0 \in \{0.1\times2.0^j | j = -12,-11,-10, ...,4,5\}$, 18 values in total\\
\item $n_0 \in \{10^2,10^6\}$\\
\item $\lambda \in \{0.0, 0.1, ..., 0.9, 0.91,0.93,0.95,0.97,0.99,1.0\}$, 16 values in total\\
\item $\eta \in \{10^j | j = -4,-3.5,-3, ...,3.5,4,4.5\}$, 18 values in total.
\end{itemize}
The linear methods, (e.g., TD(0) true online ETD($\LL$)), made use of $\alpha_0, n_0$, and $\lambda$, whereas the LSTD made use of $\eta$ to initialize the incremental approximation of $\Amat$ inverse and $\lambda$. For the linear methods we also tested decaying step size schedule as originally investigated by Boyan
$$\alpha_t = \alpha_0\frac{n_0 + 1}{n_0 + \text{\#terminations }}.$$
We also tested constant step-sizes where $\alpha_t = \alpha_0$.  
The ATD algorithm, as proposed was tested with one fixed parameter setting.       
 
\subsection{Mountain Car}
Our second batch of experiments was conducted on the classic RL benchmark domain Mountain Car. We used the \citet{sutton1998reinforcement} specification of the domain, where the agent's objective is to select one of three discrete actions (reverse, coast, forward), based on the continuous position and velocity of an underpowered car to drive it out of a valley, at which time the episode terminates. This is an undiscounted task. Each episode begins at the standard initial location --- randomly near the bottom of the hill --- with zero velocity. Actions were selected according to a stochastic {\em Bang-bang} policy, where reverse is selected if the velocity is negative and forward is selected if the velocity is positive and occasionally a random action is selected---we tested randomness in action selection of 0\%, 10\%, and 20\%.

We used tile coding to convert the continuous state variable into high-dimensional binary feature vectors. The position and velocity we tile coded jointly with 10 tilings, each forming a two dimensional uniform grid partitioned by 10 tiles in each dimension. This resulted in a binary feature vector of length 1000, with exactly 10 components equal to one and the remaining equal to zero. We requested 1024 memory size to guarantee the performance of tile coder, resulted in finally 1024 features. We used a standard freely available implementation of tile coding \footnote{https://webdocs.cs.ualberta.ca/~sutton/tiles2.html}, which is described in detail in \citet{sutton1998reinforcement}.

We tested the following parameter settings for Mountain Car:
\begin{itemize} 
\item $\alpha_0 \in \{0.1 \times 2.0^j | j = -7,-6, ...,4,5\}$ divided by number of tilings, 13 values in total\\
\item $\lambda \in \{0.0, 0.1, ..., 0.9, 0.93,0.95,0.97,0.99, 1.0\}$, 15 values in total \\
\item $\eta \in \{10^j | j = -4,-3.25,-2.5, ...,3.5,4.25,5.0\}$, 13 values in total.
\end{itemize}
The linear methods (e.g., TD(0)), iLSTD, and fast LSTD made use of $\alpha_0$ as stepsize, ATD uses $\alpha_0/100$ as regularizer, whereas the LSTD, and random projection LSTD made use of the $\eta$ as regularization for Sherman-Morrison matrix initialization. All methods except fast LSTD and TD(0) made use of the $\lambda$ parameter. iLSTD used decaying step-sizes with $n_0 = 10^2$. In addition we fixed the number of descent dimensions for iLSTD to one (recommended by previous studies \citep{geramifard2006incremental,geramifard2007ilstd}). We found that the linear methods, on the other hand, performed worse in this domain with decayed step-sizes so we only reported the performance for the constant step size setting. In this domain we tested several settings for the regularization parameter for ATD. However, as the results demonstrate ATD is insensitive to this parameter. Therefore we present results with the same fixed parameter setting for ATD as used in Boyan's chain. The low rank matrix methods---including ATD---were tested with rank equal to 20, 30, 40, 50, and 100. With rank 20, 30, 40, we observed that ATD can still do reasonably well but converges slower. However, rank = 100 setting does not show obvious strength, likely due to the fact that the threshold for inversing $\Amat$ remains unchanged.
 
\subsection{Energy Allocation}
Our final experiments were run on a simulator of a complex energy storage and allocation task. This domain simulates control of a storage device that interacts with a market and stochastic source of energy as a continuing discounted RL tasks. The problem was originally modeled as a finite horizon undiscounted task \citep{salas2013benchmarking}, with four state variables at each time step: the amount of energy in the storage device $R_t$, the net amount of wind energy $E_t$, time aggregate demand $D_t$, and price of electricity $P_t$ in spot market. The reward function encodes the revenue earned by the agent's energy allocation strategy as a real value number. The policy to be evaluated was produced by an approximate dynamic programming algorithm from the literature\citep{salas2013benchmarking}. The simulation program is from \emph{Energy storage datasets II} from \emph{http://castlelab.princeton.edu}. 

We made several minor modifications to the simulator to allow generating training or testing data for policy evaluation. First, we modified the original policy by setting the input time index as $ (\text{\#timeindex} \mod 24)$ so that we can remove the restriction that time index must be no greater than $24$. Though no longer an optimal policy, this still constitutes a valid policy that provides the same distribution over actions for a given state. Second, we added an additional variable, $D_{t-1}$, to the state at time $t$, encoding the state as five variables. This addition was to ensure a Markov state, are using only the original four variables results in a time-dependent state. Third, we considered the problem as a continuing task by setting discount rate ($\gamma = 0.99$) when estimating values of states. 

We used the following parameter setting of generating the training data and testing data. The stochastic processes associated with $P_t, E_t, D_t$ are jump process, uniform process and sinusoidal process. The ranges of $R_t, E_t, P_t, D_t$ are: $[0, 30], [1, 7], [30, 70], [0, 7]$. When generating the training trajectories, we randomly choose each state values from the ranges: $[0, 10], [1, 5], [30, 50], [0, 7]$. 

Again, we used tile coding to convert the state variable into high-dimensional binary feature vectors, similar to how the acrobot domain was encoded in prior work (see Sutton \& Barto, 1998). We tile coded all 3-wise combinations, all pair-wise combinations, and each of the five state variables independently (sometimes called stripped tilings). More specifically we used:
\begin{itemize}
\item all five one-wise tilings of 5 state variables, with gridsize = 4, numtilings = 32 (memory = $5\times4\times32$)\\
\item all ten two-wise tilings of 5 state variables, with gridsize = 4, numtilings = 32 (memory = $10\times4^2\times32$)\\
\item all ten three-wise tilings of 5 state variables, with gridsize = 2, numtilings = 32 (memory = $10\times2^3\times32$).\\
\end{itemize}
This resulted in a binary feature vector of length 8320, which we hashed down to $8192 = 2^{13}$.    
Training data and evaluation were conducted in the exact same manner as the Mountain car experiment. We tested a similar set of parameters as before:
\begin{itemize}
\item $\alpha_0 \in \{0.1\times2.0^j | j = -7,-6, ...,4,5\}$ divided by number of tilings, 13 values in total\\
\item $\lambda \in \{0.0, 0.1, ..., 0.9, 1.0\}$, 10 values in total\\
\item $\eta \in \{10^j | j = -4,-3.25,-2.5, ...,3.5,4.25,5.0\}$, 13 values in total.
\end{itemize}

\begin{figure}[htp]
\hspace{-0.5cm}
     \includegraphics[width=0.5\textwidth]{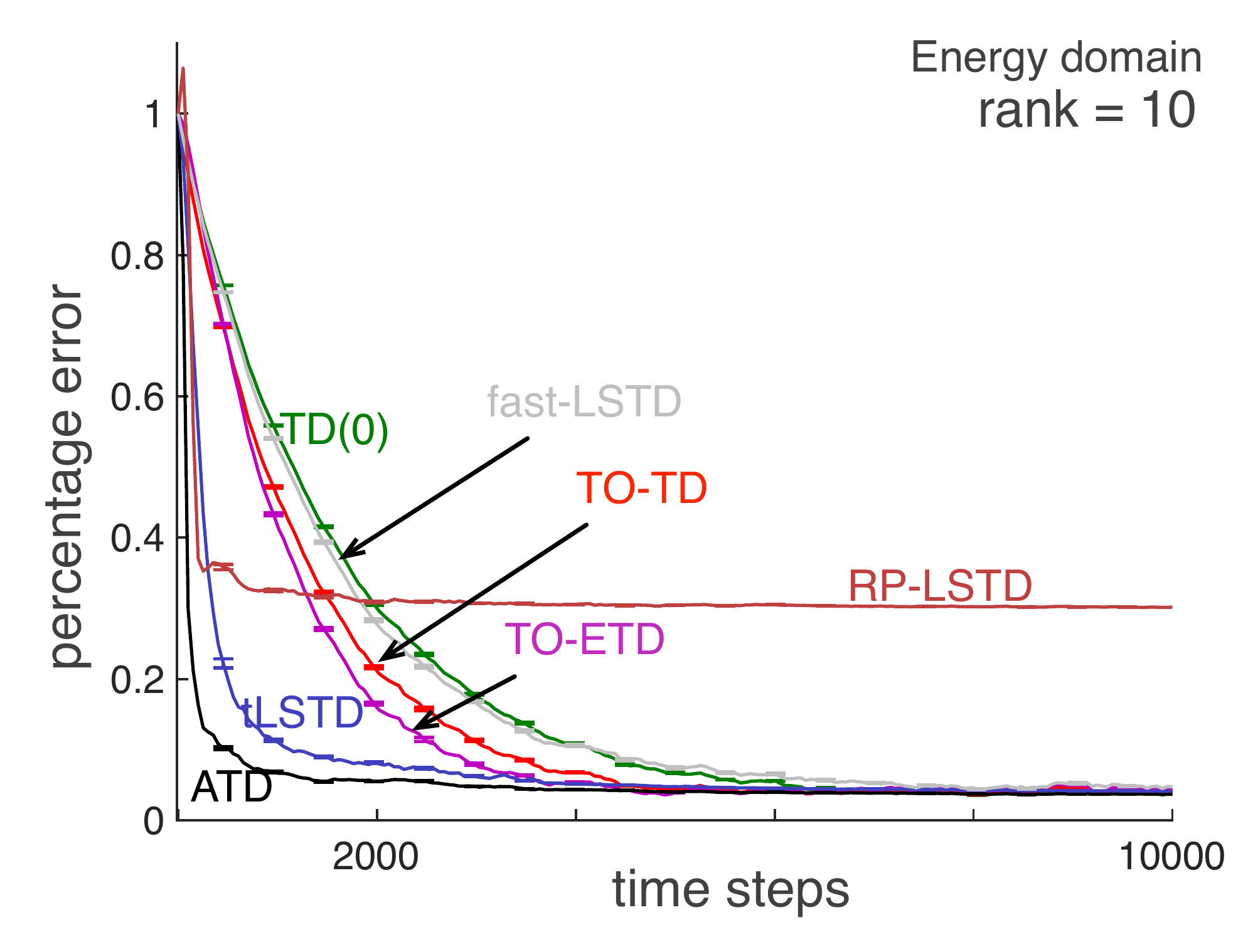}
     \caption{\label{energy} Learning curves on energy allocation domain with rank equal to 10. Here we see the clear difference of the effect of rank on these two methods. ATD is only using the curvature information in $\Arank$, to speed learning, whereas tLSTD uses $\Arank$ in a closed form solution. 
     }
\end{figure}

Due to the size of the feature vector we excluded LSTD from the results, while iLSTD was also excluded due to it's slow runtime and poor performance in Mountain Car. Note that though iLSTD avoids $O(\xdim^2)$ computation per step for sparse features, it still needs to store and update an $O(\xdim^2)$ matrix, and so does not scale as well as the other sub-quadratic methods.

\end{document}